\titleformat{\subsubsection}[runin]
	{\normalfont\normalsize\bfseries\filcenter}{\thesubsubsection.}{1 ex}{}
\newcommand{\emailaddr}[1]{\href{mailto:#1}{\texttt{#1}}}
\declaretheoremstyle[qed=$\diamond$,headpunct={ --- },headfont=\normalfont\itshape]{myremark}
\declaretheoremstyle[qed=$\blacksquare$,bodyfont=\normalfont]{mydefinition}
\declaretheorem[name=Theorem]{Thm}
\declaretheorem[within=section,name=Lemma]{Lem}
\declaretheorem[sibling=Lem,name=Proposition]{Prop}
\declaretheorem[sibling=Lem,name=Corollary]{Cor}
\declaretheorem[style=myremark,sibling=Lem,name=Remark]{Rem}
\declaretheorem[style=mydefinition,sibling=Lem,name=Definition]{Def}
\declaretheorem[style=mydefinition,sibling=Lem,name=Example]{Ex}
\newcommand{\rank}{\operatorname{rank}}
\newcommand{\Tr}{\operatorname{Tr}}
\newcommand{\mom}{\ensuremath{\mathbf{M}}}
\newcommand{\cumu}{\ensuremath{\kappa}}
\tikzset{join/.code=\tikzset{after node path={%
\ifx\tikzchainprevious\pgfutil@empty\else(\tikzchainprevious)%
edge[every join]#1(\tikzchaincurrent)\fi}}}
\tikzset{>=stealth',every on chain/.append style={join},
         every join/.style={->}}
\newcommand{\RR}{\ensuremath{\mathbb{R}}}
\newcommand{\NN}{\ensuremath{\mathbb{N}}}
\newcommand{\EE}{\ensuremath{\mathbb{E}}}
\begin{document}
\title{Efficient Orthogonal Tensor Decomposition,\\ with an Application to Latent Variable Model Learning}
\author{
	\href{http://www.ucl.ac.uk/statistics/people/franz-kiraly}{Franz J. Király} \thanks{Department of Statistical Science, University College London; and MFO; \emailaddr{f.kiraly@ucl.ac.uk}}}
\date{}
\maketitle
\begin{abstract}
\begin{normalsize}
Decomposing tensors into orthogonal factors is a well-known task in statistics, machine learning, and signal processing. We study orthogonal outer product decompositions where the factors in the summands in the decomposition are required to be orthogonal across summands, by relating this orthogonal decomposition to the singular value decompositions of the flattenings. We show that it is a non-trivial assumption for a tensor to have such an orthogonal decomposition, and we show that it is unique (up to natural symmetries) in case it exists, in which case we also demonstrate how it can be efficiently and reliably obtained by a sequence of singular value decompositions. We demonstrate how the factoring algorithm can be applied for parameter identification in latent variable and mixture models.
\end{normalsize}
\end{abstract}

\section{Introduction}
Decomposing a tensors into its components, and determining the number of those (= the rank) is a multidimensional generalization of the singular value decomposition and the matrix rank, and a reoccurring task in all practical sciences, appearing many times under different names; first discovered by Hitchcock~\cite{Hitchcock1927} and then re-discovered under names such as PARAFAC~\cite{Harshman1970} or CANDECOMP~\cite{CarrollChang1970}, it has been applied in many fields such as chemometrics, psychometrics, and signal processing \cite{Bro1997,Parafac2000,NionSidi2009}. An extensive survey of many applications can be found in \cite{Sidi2004,DeLathauwer:2008}.\\

Recently, motivated by real world applications, orthogonality constraints on the decomposition have been studied in the literature, such as the orthogonal rank decomposition and the combinatorial orthogonal rank decomposition, which can be traced back to~\cite{Deni1989,Kolda01orthogonaltensor}, and the orthogonal decomposition in~\cite{Martin06ajacobi-type} and~\cite{Hsu2012}, the latter of which occurs for example in the identification of latent variable models from empirical moments, and several other statistical estimation tasks, see~\cite{Ana2012} for a survey. The orthogonality constraints imposed in these two branches of literature are not the same, as~\cite{Deni1989,Kolda01orthogonaltensor} imposes summand-wise orthogonality, while in~\cite{Martin06ajacobi-type,Hsu2012,Ana2012}, factor-wise orthogonality can be deduced from the model constraints. In~\cite{Martin06ajacobi-type}, a Jacobi-like and heuristic algorithm was described to obtain a close orthogonal decomposition via Jacobi angle optimization for general tensors; in~\cite{Ana2012}, the authors describe a second order fixed point method for obtaining the decomposition.\\

In~\cite{Ish2013ICML,Son2013ICML}, hierarchical tensor decomposition models are discussed in the context of latent tree graphical models, and algorithms for the identification of this decomposition are described. While this is not explicitly done in the language of orthogonal tensor decompositions, the idea of using flattenings is similar to the one presented, and, in the specific context of tree models, a specific instance orthogonal tensor decomposition, as described in~\cite{Ana2012}.\\

In this paper, we study the orthogonal decomposition model, as it occurs in~\cite{Hsu2012,Ana2012}, namely with factor-wise orthogonality constraints. We show that this kind of decomposition can be directly transformed to a set of singular value decompositions, both theoretically and practically. We give identifiability results for this kind of orthogonal decomposition, showing that it is unique\footnote{up to natural symmetries} in case of existence, and we provide algorithms to obtain the orthogonal decomposition, by reducing it to a sequence of singular value decompositions. We apply these algorithms to a latent variable identification problem which was discussed in~\cite{Hsu2012,Ana2012}, reducing it to a series of eigenvalue problems. In particular, by performing the reduction to singular value decomposition, we show that all existing theory on the singular value decomposition, concerning theoretical issues as well as numerical and algorithmical ones, can be readily applied to the orthogonal decomposition problem.

\section{Theoretical Background}
\subsection{Tensors}
\subsubsection{Definition of a Tensor}
While tensors are common objects, their notation diverges throughout the literature. For ease of reading, we provide the basic definitions.
\begin{Def}
A real tensor of size $(n_1\times n_2\times\dots\times n_d)$ and of degree $d$ is an element of the set
$$\RR^{n_1\times n_2\times \dots \times n_d}=\left\{(a_{i_1\dots i_d})_{\begin{subarray}{l}
        1\le i_1\le n_1\\  \vdots \\ 1\le i_d\le n_d
      \end{subarray}}\right\}.$$
If $n_1 = n_2 = \dots = n_d,$ we also write $\RR^{n^{\times d}}:= \RR^{n_1\times n_2\times \dots \times n_d}.$
\end{Def}

\subsubsection{Linear Transformation}
Let us introduce a useful shorthand notation for linearly transforming tensors.
\begin{Def}
Let $A\in \RR^{m\times n}$ be a matrix. For a tensor $T\in \RR^{n^{(\times d)}}$, we denote by $A\circ T$ the application of $A$ to $T$ along all tensor dimensions, that is, the tensor $A\circ T\in \RR^{m^{(\times d)}}$ defined as
$$\left(A\circ T\right)_{i_1\dots i_d}=\sum_{j_1=1}^n\dots \sum_{j_d=1}^n A_{i_{1}j_{1}}\cdot\ldots\cdot A_{i_{d}j_{d}}\cdot T_{j_1\dots j_d}.$$
\end{Def}

\begin{Rem}
For $T\in \RR^{n^{(\times d)}}$ and $A\in \RR^{m\times n}, A'\in \RR^{m'\times m}$, note that
$$A'\circ (A\circ T) = (A'\cdot A) \circ T.$$
\end{Rem}

\subsubsection{Flattening}
A flattening of a tensor is the tensor obtained from regarding different indices as one index.

\begin{Def}
Denote by $[k]=\{1,2,\dots, k$\}. A surjective map $\sigma:[d]\rightarrow [\tilde{d}]$ is called $d$-to-$\tilde{d}$ \emph{flattening map}.\\
\end{Def}

\begin{Def}
Let $T\in \RR^{n_1\times \dots \times n_d}$ be a tensor, and let $\sigma$ be a $d$-to-$\tilde{d}$ flattening map. Then, the $\sigma$-flattening of $T$ is the degree $\tilde{d}$ tensor $\sigma\dashv T\in \RR^{\tilde{n}_1\times \dots \times \tilde{n}_{\tilde{d}}},$ with $\tilde{n}_k=\prod_{\ell\in\sigma^{-1}(k)}n_\ell,$ defined as
$$(\sigma\dashv T)_{j_1\dots j_{\tilde{d}}} := T_{i_1\dots i_d}\quad,\mbox{where}\; j_k=(i_\ell\;:\;\ell\in \sigma^{-1}(k)).$$
Conversely, if $\tilde{T}=\sigma\dashv T$, then we write $T=\sigma\vdash\tilde{T}$ and call $T$ the \emph{unflattening} of $\tilde{T}$.
\end{Def}
Note that the indices of $\sigma\dashv T$ are, as defined, tuples of indices of $T$; however, this does not contradict the definition of tensor since $[n_1]\times[n_2]\times \dots [n_k]$ can be bijectively mapped onto $\left[\prod_{i=1}^k n_i\right].$ It is convenient to choose the lexicographical ordering for the bijection, but it is mathematically not necessary to fix any such bijection.

For unflattening, if only $\tilde{T}$ and $\sigma$ are given, it is not clear what $\sigma\vdash\tilde{T}$ should be without further specification, since the same unflattening can arise from different tensors even if $\sigma$ is fixed. Therefore, we will use it only in the context where a given flattening is being reversed, or partially reversed, therefore making the unflattening well-defined.

\begin{Ex}
Let $T\in \RR^{n_1\times n_2\times n_3}$ be a tensor, let $\sigma: 1\mapsto 1, 2\mapsto 2, 3\mapsto 2$. The $\sigma$-flattening of $T$ is a $(n_1\times n_2n_3)$-matrix $\tilde{T}:=\sigma\dashv T$. The columns of $\sigma\dashv \tilde{T}$ are all the $n_2n_3$ sub-$(n_1\times 1\times 1)$-tensors of $T$ where second and third index are fixed. The columns of $\sigma\dashv T$ are indexed by the pairs $(k,\ell)$, or, alternatively, by bijection, by the lexicographical index number $(k-1)\cdot n_2 + \ell$. Taking any $(n'_1\times n_2n_3)$-submatrix of $\tilde{T}$, we can unflatten to obtain a $(n'_1\times n_2\times n_3)$-tensor $\sigma\vdash \tilde{T}$.
\end{Ex}

\subsubsection{Outer Product}
Furthermore, we introduce notation for creating tensors of higher order out of tensors of lower order:
\begin{Def}
Let $v^{(1)}\in \RR^{n_1},\dots, v^{(d)}\in \RR^{n_d}$. The \emph{outer product} of the $v^{(k)}$ is the tensor $v^{(1)}\otimes \dots \otimes v^{(d)}\in\RR^{n_1\times \dots n_d}$ defined by
$$(v^{(1)}\otimes \dots \otimes v^{(d)})_{i_1\dots i_d} := \prod_{k=1}^d v^{(k)}_{i_k}.$$
In case that $v=v^{(1)}= \dots = v^{(d)}$, we also write $v^{\otimes d} := v^{(1)}\otimes \dots \otimes v^{(d)}.$\\

Similarly, if $A\in \RR^{n_1\times \dots\times n_c}$ and $B\in \RR^{n_{c+1}\times \dots\times n_d}$ are tensors, the outer product of $A$ and $B$ is the tensor $A\otimes B\in \RR^{n_1\times \dots n_d}$ defined as
$$(A\otimes B)_{i_1\dots i_d} := \prod_{k=1}^c A^{(k)}_{i_1\dots i_c}\cdot \prod_{k=c+1}^d B^{(k)}_{i_{c+1}\dots i_d}.$$
Outer products of several tensors $A_1\otimes \dots\otimes A_k$ %and powers $A^{\otimes d}$ can be
by induction on $k$, namely:
$$A_1\otimes \dots\otimes A_k := (A_1\otimes \dots\otimes A_{k-1})\otimes A_k.$$
\end{Def}

A useful calculation rule for linear transformation is the following:
\begin{Lem}
Let $A\in \RR^{n^{\times d_1}}$ and $B\in \RR^{n^{\times d_2}},$ let $A\in\RR^{m\times n}$. Then,
$$P\circ (A\otimes B) = (P\circ A)\otimes (P\circ B).$$
Similarly, if $v\in \RR^{n}$, then $P\circ \left(v^{\otimes d}\right) = \left(P\circ v\right)^{\otimes d}.$
\end{Lem}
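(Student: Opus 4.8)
The plan is to verify both identities by a direct entry-wise computation, since both $\circ$ and $\otimes$ are defined coordinate-wise; there is nothing deep here, it is a bookkeeping exercise. Write $d = d_1 + d_2$, relabel the matrix as $P\in\RR^{m\times n}$ (the statement should read ``let $P\in\RR^{m\times n}$''), and fix an output index tuple $(i_1,\dots,i_d)$. Expanding the left-hand side by the definition of $P\circ(\cdot)$ produces a sum over $(j_1,\dots,j_d)$ of $\prod_{k=1}^d P_{i_kj_k}$ times $(A\otimes B)_{j_1\dots j_d}$, and by the definition of the outer product the last factor equals $A_{j_1\dots j_{d_1}}\cdot B_{j_{d_1+1}\dots j_d}$.

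The key step is then to observe that the summand splits into a part depending only on $(j_1,\dots,j_{d_1})$, namely $\bigl(\prod_{k=1}^{d_1}P_{i_kj_k}\bigr)A_{j_1\dots j_{d_1}}$, and a part depending only on $(j_{d_1+1},\dots,j_d)$, namely $\bigl(\prod_{k=d_1+1}^{d}P_{i_kj_k}\bigr)B_{j_{d_1+1}\dots j_d}$. Hence the sum over the full multi-index factors as a product of two independent sums (just distributivity, $\sum_{j,k}x_jy_k = (\sum_j x_j)(\sum_k y_k)$). Recognizing the two factors as $(P\circ A)_{i_1\dots i_{d_1}}$ and $(P\circ B)_{i_{d_1+1}\dots i_d}$, and comparing with the definition of $(P\circ A)\otimes(P\circ B)$, gives the first identity.

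For the second identity I would argue by induction on $d$ using the first part. The base case $d=1$ is the tautology $P\circ v = Pv = (P\circ v)^{\otimes 1}$. For the inductive step I would write $v^{\otimes d} = v\otimes v^{\otimes(d-1)}$, apply the first identity with $A=v$ and $B=v^{\otimes(d-1)}$ to get $P\circ v^{\otimes d} = (P\circ v)\otimes(P\circ v^{\otimes(d-1)})$, and then substitute the induction hypothesis $P\circ v^{\otimes(d-1)} = (P\circ v)^{\otimes(d-1)}$ to conclude $P\circ v^{\otimes d} = (P\circ v)\otimes(P\circ v)^{\otimes(d-1)} = (P\circ v)^{\otimes d}$.

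No genuine obstacle arises; the only thing to be careful about is keeping the two index blocks $(1,\dots,d_1)$ and $(d_1+1,\dots,d)$ straight and writing the factorization of the multi-index sum cleanly. If the stated identity were needed for outer products of more than two tensors (as introduced in the Definition), it would follow at once by a further induction on the number of factors from the two-factor case proved above.
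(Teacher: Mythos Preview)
Your proof is correct. The paper actually states this lemma without proof, presenting it simply as ``a useful calculation rule for linear transformation''; your direct entry-wise verification (factor the multi-index sum via distributivity, then induct on $d$ for the symmetric power case) is exactly the routine computation the paper is tacitly invoking.
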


Outer products are also compatible with flattenings:
\begin{Lem}\label{Lem:prodflat}
Let $A\in\RR^{n_1\times \dots\times n_c}$ and $B\in\RR^{n_{c+1}\times \dots\times n_d}.$ Let $\tau$ be a $d$-to-$k$-flattening, let $\sigma_1$ be the restriction of $\tau$ to $[c]$, and let $\sigma_2$ be the $(d-c)$-to-$\tilde{k}$-flattening defined by $\sigma(i):=\tau(c+i)$. Then,
$$\tau\dashv(A\otimes B) = (\sigma_1\dashv A)\otimes (\sigma_2\dashv B).$$
\end{Lem}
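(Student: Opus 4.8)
The plan is to verify the claimed identity entrywise, reducing everything to the definitions of the outer product and of a flattening. Both sides of the equation $\tau\dashv(A\otimes B) = (\sigma_1\dashv A)\otimes (\sigma_2\dashv B)$ are tensors of degree $k$; first I would check that they have the same size. The size of $\tau\dashv(A\otimes B)$ in its $j$-th slot is $\prod_{\ell\in\tau^{-1}(j)} n_\ell$, where the product ranges over $\ell\in[d]$. Splitting $\tau^{-1}(j)$ into the part inside $[c]$ and the part inside $\{c+1,\dots,d\}$, and using that $\sigma_1$ is the restriction of $\tau$ to $[c]$ while $\sigma_2(i)=\tau(c+i)$, this product factors as $\bigl(\prod_{\ell\in\sigma_1^{-1}(j)}n_\ell\bigr)\cdot\bigl(\prod_{i\in\sigma_2^{-1}(j)}n_{c+i}\bigr)$, which is exactly the size of the $j$-th slot of $(\sigma_1\dashv A)\otimes(\sigma_2\dashv B)$ by the size formula for flattenings applied to $A$ and $B$ together with the definition of the outer product of two tensors. (Here one should note that $\sigma_2$ is well-defined as a surjective map onto $[\tilde k]$ with $\tilde k\le k$; strictly the outer product notation needs the two factors to have complementary index ranges, so a small bookkeeping remark identifying $[k]$ with $[\tilde k]\sqcup(\text{the rest})$ is in order, or one simply allows some slots of $B$ to be "trivial" of size $1$.)

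Next I would compare entries. Fix a multi-index $(j_1,\dots,j_k)$ for the left-hand side. By the definition of flattening, $(\tau\dashv(A\otimes B))_{j_1\dots j_k} = (A\otimes B)_{i_1\dots i_d}$, where for each $m$ the original indices $(i_\ell : \ell\in\tau^{-1}(m))$ are precisely the components packaged into $j_m$. By the definition of the outer product of the tensors $A$ and $B$, this equals $A_{i_1\dots i_c}\cdot B_{i_{c+1}\dots i_d}$. Now I re-package: the tuple $(i_1,\dots,i_c)$, grouped according to $\sigma_1$, is exactly the multi-index of $\sigma_1\dashv A$ obtained from the components of $(j_1,\dots,j_k)$ that come from $[c]$; similarly $(i_{c+1},\dots,i_d)$, grouped according to $\sigma_2$, gives the multi-index of $\sigma_2\dashv B$. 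Hence $A_{i_1\dots i_c} = (\sigma_1\dashv A)_{\dots}$ and $B_{i_{c+1}\dots i_d} = (\sigma_2\dashv B)_{\dots}$ at the matching indices, and their product is by definition the corresponding entry of $(\sigma_1\dashv A)\otimes(\sigma_2\dashv B)$. This matches the left-hand side, completing the verification.

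The only genuinely delicate point — and the one I would treat carefully rather than wave away — is the bookkeeping of how a composite index $j_m$ of the flattened tensor decomposes into its "$A$-part" and its "$B$-part". Concretely, for a given $m\in[k]$ the fiber $\tau^{-1}(m)$ may contain indices from both $[c]$ and $\{c+1,\dots,d\}$, so the coordinate $j_m$ of $\tau\dashv(A\otimes B)$ is a tuple that splits as a pair (a coordinate of $\sigma_1\dashv A$, a coordinate of $\sigma_2\dashv B$); making this bijection between $[\tilde n_m]$ and $[\text{size in }A]\times[\text{size in }B]$ explicit, and checking it is the one induced by the lexicographic convention (or noting that no convention needs to be fixed, as remarked after the definition of flattening), is where essentially all the content lies. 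Everything else is a direct unwinding of definitions, so I expect the proof to be short once that correspondence is pinned down.
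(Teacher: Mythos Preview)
Your approach is correct and is exactly the natural one: the paper in fact states this lemma without proof, so there is nothing to compare against beyond noting that an entrywise verification from the definitions is what is intended. Your identification of the one genuine subtlety --- that a composite index $j_m$ of $\tau\dashv(A\otimes B)$ must be split into its $A$-part and its $B$-part, and that $\sigma_1,\sigma_2$ need not be surjective onto $[k]$ so some bookkeeping is required to interpret the outer product on the right --- is apt, and is really the only content here.
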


%\subsubsection{Tensor Decomposition}
%The construction operator, which is the reverse of tensor decomposition, is the following generalized form of matrix multiplication:
%\begin{Def}
%Let $A^{(1)},\dots, A^{(d)}\in \RR^{n_d\times r}$ be matrices. We call $\star \left[A^{(1)},\dots, A^{(d)}\right]$ the \emph{sum-product} of the $A^{(i)}$, defined as the tensor $\star \left[A^{(1)},\dots, A^{(d)}\right]\in \RR^{n_1,\dots, n_d},$ defined as
%$$\star \left[A^{(1)},\dots, A^{(d)}\right]_{i_1\dots i_d}=\sum_{k=1}^r A^{(1)}_{i_1 k} \cdot A^{(d)}_{i_d k}.$$
%Moreover, if $A=A^{(1)} =\dots =A^{(d)}$, we also write $A^{\star d}:=\star \left[A^{(1)},\dots, A^{(d)}\right].$
%\end{Def}
%
%\begin{Rem}
%For $v^{(1)}\in \RR^{n_1},\dots, v^{(d)}\in \RR^{n_d},$ it holds that
%$$v^{(1)}\otimes \dots \otimes v^{(d)}=\star \left[v^{(1)},\dots, v^{(d)}\right].$$
%\end{Rem}

%\begin{Def}
%Let $T\in \RR^{n_1\times n_2\times \dots \times n_d}$. A sum-product decomposition
%$$T=\star \left[A^{(1)},\dots, A^{(d)}\right],$$
%where $A^{(1)},\dots, A^{(d)}\in \RR^{n_d\times r}$, is called rank-$r$ CP-decomposition of $T$.\\
%
%If $r\le n_k$ for $k\lneq d$, $A^{(k)}$ is row-orthonormal for $k\lneq d$, and $n_d=1$, the decomposition is called an orthogonal decomposition.
%\end{Def}

\subsection{Orthogonality and Duality}

We briefly review the notions of scalar product and some results, which can also be found in~\cite{Kolda01orthogonaltensor} in slightly different formulation and slightly less generality.
\begin{Def}
A \emph{scalar product} is defined on $\RR^{n_1\times n_2\times \dots \times n_d}$ by
\begin{align*}
 \langle .,.\rangle:& \RR^{n_1\times n_2\times \dots \times n_d}\times \RR^{n_1\times n_2\times \dots \times n_d}\longrightarrow\RR\\
 & (A,B)\mapsto \sum_{i_1=1}^{n_1}\dots \sum_{i_d=1}^{n_d} A_{i_1\dots i_d}\cdot B_{i_1\dots i_d}
\end{align*}
As usual, $A,B\in \RR^{n_1\times \dots \times n_d}$ are called orthogonal to each other if $\langle A,B\rangle =0$, and $A$ is called normal if $\langle A,A\rangle = 1$. A set $A_1,\dots, A_r\in \RR^{n_1\times \dots \times n_d}$ is called orthonormal if $\langle A_i,A_j\rangle =\delta_{ij}$, where  $\delta_{ij}$ is the Kronecker-delta.
\end{Def}
By identification of $\RR^{n_1\times \dots \times n_d}$ with $\RR^{N}$, where $N=\prod_{i=1}^d n_i$, the scalar product on tensors inherits all properties of the real scalar product.

\begin{Rem}\label{Rem:sctr}
It is seen by checking definitions that the scalar product on matrices is identical to the trace product, i.e., $\langle A,B\rangle =\Tr (A^\top B)$ for $A,B\in\RR^{m\times n}$.
\end{Rem}

An important property of the scalar product is compatibility with flattenings:
\begin{Lem}\label{Lem:orthflat}
Let $T_1,T_2\in \RR^{n_1\times n_2\times \dots \times n_d}$, let $\sigma$ be a $d$-to-$\tilde{d}$ flattening map. Then,
$$\langle T_1,T_2\rangle = \langle \sigma\dashv T_1,\sigma\dashv T_2\rangle.$$
In particular, $T_1$ and $T_2$ are orthogonal to each other if and only if $\sigma\dashv T_1$ and $\sigma\dashv T_2$ are.
\end{Lem}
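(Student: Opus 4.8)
The plan is to observe that a flattening does nothing but relabel the entries of a tensor, so that computing the scalar product amounts to summing the same collection of products $( T_1)_\bullet \cdot (T_2)_\bullet$ either way. Concretely, I would first unwind the definition: $\langle T_1, T_2\rangle = \sum_{i_1=1}^{n_1}\cdots\sum_{i_d=1}^{n_d} (T_1)_{i_1\dots i_d}(T_2)_{i_1\dots i_d}$, while $\langle \sigma\dashv T_1,\sigma\dashv T_2\rangle = \sum_{j_1=1}^{\tilde n_1}\cdots\sum_{j_{\tilde d}=1}^{\tilde n_{\tilde d}} (\sigma\dashv T_1)_{j_1\dots j_{\tilde d}}(\sigma\dashv T_2)_{j_1\dots j_{\tilde d}}$, with $\tilde n_k = \prod_{\ell\in\sigma^{-1}(k)} n_\ell$.

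The key step is to exhibit the reindexing bijection. Since $\sigma:[d]\to[\tilde d]$ is surjective, the preimages $\sigma^{-1}(1),\dots,\sigma^{-1}(\tilde d)$ form a partition of $[d]$. Hence the assignment $(i_1,\dots,i_d)\mapsto (j_1,\dots,j_{\tilde d})$ with $j_k = (i_\ell : \ell\in\sigma^{-1}(k))$ is a bijection from $[n_1]\times\cdots\times[n_d]$ onto $\prod_{k=1}^{\tilde d}\big(\prod_{\ell\in\sigma^{-1}(k)}[n_\ell]\big)$, i.e. onto the index set of the flattened tensors (using, if one insists on scalar indices, the fixed lexicographical bijection $\prod_{\ell\in\sigma^{-1}(k)}[n_\ell]\to[\tilde n_k]$). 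By the very definition of $\sigma\dashv T$, under this bijection one has $(\sigma\dashv T_1)_{j_1\dots j_{\tilde d}} = (T_1)_{i_1\dots i_d}$ and likewise for $T_2$. Therefore the two sums above run over the same index set (via the bijection) with equal summands, so they are equal; this proves the identity.

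The ``in particular'' statement is then immediate: $T_1\perp T_2$ means $\langle T_1,T_2\rangle = 0$, which by the equality just established holds iff $\langle \sigma\dashv T_1,\sigma\dashv T_2\rangle = 0$, i.e. iff $\sigma\dashv T_1\perp\sigma\dashv T_2$. I do not anticipate a genuine obstacle here; the only point requiring a little care is articulating the bijection cleanly — in particular noting that surjectivity of $\sigma$ is exactly what guarantees the $\sigma^{-1}(k)$ tile $[d]$ without overlap, so that no entry of $T$ is duplicated or dropped in $\sigma\dashv T$.
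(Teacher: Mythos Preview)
Your proposal is correct and follows exactly the same idea as the paper's proof: a flattening is a bijection on the set of entries, so the entry-wise scalar product is unchanged. You have simply spelled out the bijection and the two sums in detail, whereas the paper dispatches the argument in a single sentence.
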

\begin{proof}
A flattening is a bijection on the set of entries, therefore the result of the entry-wise scalar product is not changed by flattening.
\end{proof}

\begin{Prop}\label{Prop:prodorth}
Let $A^{(j)}_1,A^{(j)}_2\in \RR^{n^{(j)}_1\times \dots\times n^{(j)}_{c_j}}$, for $j=1,\dots, k$. Then,
$$\left\langle A^{(1)}_1\otimes \dots A^{(k)}_1, A^{(1)}_2\otimes \dots A^{(k)}_2\right\rangle = \prod_{j=1}^k\left\langle A^{(j)}_1,A^{(j)}_2\right\rangle.$$
In particular, if there exists $j$ such that $A^{(j)}_1,A^{(j)}_2$ are orthogonal to each other, then the outer products $A^{(1)}_1\otimes \dots \otimes A^{(k)}_1$ and $A^{(1)}_2\otimes \dots \otimes A^{(k)}_2$ are orthogonal to each other.
\end{Prop}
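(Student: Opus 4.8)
The plan is to reduce the claim to the coordinate definitions of the scalar product and of the outer product. For $t \in \{1,2\}$, index the entries of $A^{(1)}_t \otimes \dots \otimes A^{(k)}_t$ by tuples $I = (I^{(1)}, \dots, I^{(k)})$, where $I^{(j)}$ runs over the index set of $A^{(j)}_t$; by the definition of the outer product (extended inductively to $k$ factors), the entry at $I$ is $\prod_{j=1}^k (A^{(j)}_t)_{I^{(j)}}$. Substituting this into the coordinate formula for $\langle \cdot , \cdot \rangle$ gives
$$\left\langle A^{(1)}_1 \otimes \dots \otimes A^{(k)}_1,\ A^{(1)}_2 \otimes \dots \otimes A^{(k)}_2 \right\rangle = \sum_{I^{(1)}} \dots \sum_{I^{(k)}} \left( \prod_{j=1}^k (A^{(j)}_1)_{I^{(j)}} \right)\left( \prod_{j=1}^k (A^{(j)}_2)_{I^{(j)}} \right) = \sum_{I^{(1)}} \dots \sum_{I^{(k)}} \prod_{j=1}^k (A^{(j)}_1)_{I^{(j)}}(A^{(j)}_2)_{I^{(j)}}.$$

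The key step is then to pull the nested finite sums through the product over $j$: since the $j$-th factor $(A^{(j)}_1)_{I^{(j)}}(A^{(j)}_2)_{I^{(j)}}$ depends only on $I^{(j)}$, a finite distributivity (Fubini-type) rearrangement turns the last expression into $\prod_{j=1}^k \big( \sum_{I^{(j)}} (A^{(j)}_1)_{I^{(j)}}(A^{(j)}_2)_{I^{(j)}} \big)$, and each inner factor is by definition $\langle A^{(j)}_1, A^{(j)}_2 \rangle$. This establishes the displayed identity.

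A more structural alternative, closer to the tools already developed, is induction on $k$: the case $k = 1$ is trivial, and the inductive step reduces — after grouping the first $k-1$ tensors into a single factor — to the case $k = 2$. For $k = 2$, flatten $A^{(1)}_t$ and $A^{(2)}_t$ entirely to vectors $a_t$ and $b_t$; by Lemma~\ref{Lem:prodflat} the corresponding $d$-to-$2$ flattening of $A^{(1)}_t \otimes A^{(2)}_t$ is the rank-one matrix $a_t b_t^\top$, so using Lemma~\ref{Lem:orthflat} and Remark~\ref{Rem:sctr},
$$\left\langle A^{(1)}_1 \otimes A^{(2)}_1,\ A^{(1)}_2 \otimes A^{(2)}_2 \right\rangle = \left\langle a_1 b_1^\top, a_2 b_2^\top \right\rangle = \Tr\big(b_1 a_1^\top a_2 b_2^\top\big) = (a_1^\top a_2)(b_1^\top b_2) = \langle A^{(1)}_1, A^{(1)}_2 \rangle \cdot \langle A^{(2)}_1, A^{(2)}_2 \rangle,$$
where the last equality again invokes Lemma~\ref{Lem:orthflat}.

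The ``in particular'' statement is then immediate: if $\langle A^{(j)}_1, A^{(j)}_2 \rangle = 0$ for some $j$, the product on the right-hand side vanishes, hence so does the scalar product of the two outer products. I do not anticipate a genuine obstacle; the only care needed is in the index bookkeeping — making precise the identification of an entry-index $I = (I^{(1)}, \dots, I^{(k)})$ of the outer product with the individual index sets of the factors — which is exactly the bijective identification of a tensor with a vector already used to define $\langle \cdot , \cdot \rangle$.
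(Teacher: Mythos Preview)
Your proposal is correct. Your ``structural alternative'' --- induction on $k$ reducing to $k=2$, then flattening each factor to a vector via Lemmas~\ref{Lem:prodflat} and~\ref{Lem:orthflat}, identifying the scalar product with a trace via Remark~\ref{Rem:sctr}, and reading off the product form --- is precisely the route the paper takes, with essentially the same sequence of equalities.

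Your first argument, the direct coordinate computation, is a genuinely different and more elementary route: it bypasses the flattening lemmas and the trace identification entirely, using only the definitions and the distributivity of finite sums over products. What the paper's approach buys is consistency with the rest of the section (everything is phrased through flattenings and matrix operations, which is the language reused in Proposition~\ref{Prop:decompflat} and Theorem~\ref{Thm:uniq}); what your direct approach buys is a self-contained two-line proof that does not depend on any of the preceding lemmas. Either is fine here.
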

\begin{proof}
By performing induction on $k$, it suffices to prove the statement for $k=2$: Let $A_1,A_2\in \RR^{n_1\times \dots\times n_c}$ and $B_1,B_2\in \RR^{n_{c+1}\times \dots\times n_d}.$ Then,
$$\langle A_1\otimes B_1, A_2\otimes B_2\rangle = \langle A_1,B_1\rangle\cdot \langle A_2,B_2\rangle.$$
We proceed to prove this statement. Let $\sigma_1$ be the $c$-to-$1$-flattening, let $\sigma_2$ be the $(d-c)$-to-$1$-flattening. Let $v_i=\sigma_1\dashv A_i$, and $w_i=\sigma_2\dashv B_i$ for $i=1,2$. By Lemma~\ref{Lem:orthflat}, it holds that
$$\langle A_i,B_i\rangle = \langle v_i,w_i\rangle\quad\mbox{for}\;i=1,2.$$
Let $\tau$ be the $d$-to-$2$-flattening defined by $\tau:\{1,\dots, c\}\mapsto \{1\}, \{c+1,\dots, d\}\mapsto \{2\}$. Let $C_i=\tau\dashv (A_i\otimes B_i)$. By Lemma~\ref{Lem:orthflat}, it holds that
$$\langle A_1\otimes B_1, A_2\otimes B_2\rangle = \langle C_1,C_2\rangle.$$
By Lemma~\ref{Lem:prodflat}, it holds that
$$\langle C_1,C_2\rangle = \langle v_1\otimes w_1, v_2\otimes w_2\rangle.$$
Using that scalar product on tensors is the trace product (see~\ref{Rem:sctr}), we obtain
$$\langle v_1\otimes w_1, v_2\otimes w_2\rangle = \Tr(v_1w_1^\top w_2 v_2^\top).$$
The cyclic property of the trace product for matrices yields
$$\Tr(v_1w_1^\top w_2 v_2^\top) = \Tr(w_1^\top w_2 v_2^\top v_1) = w_1^\top w_2v_2^\top v_1 = \langle v_1,v_2\rangle\cdot \langle w_1,w_2\rangle.$$
All equalities put together yield the claim.
\end{proof}

\begin{Cor}
Let $\mu_1,\mu_2\in \RR^n$, and $d\in\NN$, such that $\langle \mu_1,\mu_2\rangle = 0$. Then,
$$\left\langle\mu_1^{\otimes d}, \mu_2^{\otimes d}\right\rangle = 0.$$
\end{Cor}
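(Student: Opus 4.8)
This is an immediate specialization of Proposition~\ref{Prop:prodorth}. The plan is to apply that proposition with $k = d$, taking every one of the $2d$ tensor factors to be a degree-$1$ tensor: set $A^{(j)}_1 := \mu_1 \in \RR^n$ and $A^{(j)}_2 := \mu_2 \in \RR^n$ for each $j = 1,\dots,d$. Then $A^{(1)}_1 \otimes \dots \otimes A^{(d)}_1 = \mu_1^{\otimes d}$ and likewise for $\mu_2$, so the product formula in Proposition~\ref{Prop:prodorth} reads
$$\left\langle \mu_1^{\otimes d}, \mu_2^{\otimes d}\right\rangle = \prod_{j=1}^d \left\langle \mu_1, \mu_2\right\rangle = \left\langle \mu_1, \mu_2\right\rangle^d.$$
Since $\langle \mu_1, \mu_2\rangle = 0$ by hypothesis and $d \geq 1$, the right-hand side vanishes, which gives the claim.

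Alternatively, one can invoke only the ``in particular'' clause of Proposition~\ref{Prop:prodorth}: with the same choice of factors, there certainly exists an index $j$ (indeed any $j \in \{1,\dots,d\}$) for which $A^{(j)}_1 = \mu_1$ and $A^{(j)}_2 = \mu_2$ are orthogonal, so the outer products $\mu_1^{\otimes d}$ and $\mu_2^{\otimes d}$ are orthogonal. Either route is a one-line deduction.

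There is essentially no obstacle here; the only point to keep in mind is the implicit convention $d \in \NN$ meaning $d \geq 1$, so that the product above is nonempty (for $d = 0$ the empty outer product is the scalar $1$ and the statement would be false). Since the corollary is stated for $d \in \NN$ and the surrounding development always uses tensors of positive degree, this is not a real restriction, and no separate case analysis is needed.
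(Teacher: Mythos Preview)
Your proof is correct and is exactly the intended deduction: the paper states this as an immediate corollary of Proposition~\ref{Prop:prodorth} without separate proof, and your specialization $A^{(j)}_1=\mu_1$, $A^{(j)}_2=\mu_2$ for all $j$ is precisely what is meant.
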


\begin{Def}\label{Def:odec}
Let $T\in \RR^{n_1\times n_2\times \dots \times n_d}$, let $[d]=S_1\cup S_2\cup\dots \cup S_k$ be a partition. A decomposition
$$T=\sum_{i=1}^r w_i\cdot A^{(1)}_i\otimes \dots \otimes A^{(k)}_i$$
with $w_i\in\RR$, and $A^{(j)}_i\in \RR^{\times_{\ell\in S_j} n_\ell}$, such that the set of $A^{(j)}_i$ with fixed $j$ is orthonormal, is called rank-$r$ \emph{orthogonal atomic decomposition} of $T$, with signature $(S_1,\dots, S_k)$. If $k=d$ and $S_i=\{i\}$, then the decomposition is called \emph{orthogonal CP-decomposition}.
\end{Def}

An orthogonal atomic decomposition does not need to exist necessarily. However, if it does, it is compatible with respect to flattenings, as Proposition~\ref{Prop:decompflat} will show. We introduce notation for a more concise statement of the compatibility first:

\begin{Def}
Let $(S_1,\dots, S_k)$ be a partition of $[d]$. We say a $d$-to-$\tilde{d}$-flattening  $\sigma$ is \emph{compatible} with the partition $(S_1,\dots, S_k)$, if it holds that $\{i,j\}\in S_\ell$ for some $\ell$ implies $\sigma(i)=\sigma(j)$. We say that $\sigma$ is \emph{strictly compatible} with the partition $(S_1,\dots, S_k)$, if it holds that $\{i,j\}\in S_\ell$ for some $\ell$ if and only if $\sigma(i)=\sigma(j)$.
\end{Def}

\begin{Prop}\label{Prop:decompflat}
Let $T\in \RR^{n_1\times n_2\times \dots \times n_d}$. Let
$$T=\sum_{i=1}^r w_i\cdot A^{(1)}_i\otimes \dots \otimes A^{(k)}_i$$
be an orthogonal atomic decomposition with signature $(S_1,\dots, S_k),$ let $\sigma$ be compatible with the signature. Then,
$$T=\sum_{i=1}^r w_i\cdot B^{(1)}_i\otimes \dots \otimes B^{(\tilde{d})}_i,\quad\mbox{where}\quad B^{(1)}_i=\sigma\dashv \left(\bigotimes_{j\in\sigma^{-1}(i)}B^{(1)}_j \right),$$
is an orthogonal atomic decomposition of $(\sigma\dashv T)$. In particular, if $\sigma$ is strictly compatible with the signature, then the decomposition is also an orthogonal CP-decomposition.
\end{Prop}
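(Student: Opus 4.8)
The plan is to reduce everything to three ingredients already available: the $\RR$-linearity of a flattening (a flattening is a bijective relabelling of the entries), the distributivity of flattenings over outer products (Lemma~\ref{Lem:prodflat}), and the behaviour of the scalar product under flattenings and outer products (Lemma~\ref{Lem:orthflat} and Proposition~\ref{Prop:prodorth}). First I would unwind the compatibility hypothesis. Saying that $\sigma$ is compatible with the signature $(S_1,\dots,S_k)$ means exactly that $\sigma$ is constant on each block $S_j$; write $\psi(j)$ for the common value, so $\psi:[k]\to[\tilde d]$ is a map with $\sigma^{-1}(m)=\bigcup_{j\in\psi^{-1}(m)}S_j$ for every $m$, and $\psi$ is surjective because $\sigma$ is. For each $m$ let $\sigma_m$ be the flattening of $\bigcup_{j\in\psi^{-1}(m)}S_j$ into a single index that is induced by $\sigma$, and set $B^{(m)}_i:=\sigma_m\dashv\bigl(\bigotimes_{j\in\psi^{-1}(m)}A^{(j)}_i\bigr)$; this is the object the statement denotes $B^{(m)}_i$.

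The core computation is then $\sigma\dashv\bigl(A^{(1)}_i\otimes\cdots\otimes A^{(k)}_i\bigr)=B^{(1)}_i\otimes\cdots\otimes B^{(\tilde d)}_i$ for each fixed $i$. I would obtain this by applying Lemma~\ref{Lem:prodflat} iteratively, peeling off the groups $\psi^{-1}(1),\psi^{-1}(2),\dots$ one at a time, so that at each step the flattening splits off exactly one factor $B^{(m)}_i$ while leaving an outer product of the remaining $A^{(j)}_i$ to which the lemma applies again. Once this is established, linearity of $\sigma\dashv(\cdot)$ gives
\[
\sigma\dashv T=\sum_{i=1}^r w_i\,\sigma\dashv\bigl(A^{(1)}_i\otimes\cdots\otimes A^{(k)}_i\bigr)=\sum_{i=1}^r w_i\, B^{(1)}_i\otimes\cdots\otimes B^{(\tilde d)}_i,
\]
which is the claimed decomposition.

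It remains to check orthonormality, i.e.\ that for each fixed $m$ the set $\{B^{(m)}_i:1\le i\le r\}$ is orthonormal. Here I would compute, using Lemma~\ref{Lem:orthflat} to discard the flattening $\sigma_m$ and then Proposition~\ref{Prop:prodorth} to split the outer product,
\[
\langle B^{(m)}_i,B^{(m)}_{i'}\rangle
=\Bigl\langle\bigotimes_{j\in\psi^{-1}(m)}A^{(j)}_i,\ \bigotimes_{j\in\psi^{-1}(m)}A^{(j)}_{i'}\Bigr\rangle
=\prod_{j\in\psi^{-1}(m)}\bigl\langle A^{(j)}_i,A^{(j)}_{i'}\bigr\rangle
=\delta_{ii'},
\]
the last equality because each factor equals $\delta_{ii'}$ by the orthonormality of $\{A^{(j)}_i:1\le i\le r\}$, and because $\psi^{-1}(m)\neq\emptyset$ (surjectivity of $\psi$) so the product is not empty. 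This shows $\sigma\dashv T$ has an orthogonal atomic decomposition with the stated signature. For the final clause, if $\sigma$ is strictly compatible then $\sigma(i)=\sigma(j)$ holds if and only if $i,j$ lie in a common block, which forces $\psi$ to be injective, hence a bijection; then $\tilde d=k$, each $\psi^{-1}(m)$ is a singleton $\{j\}$, and $B^{(m)}_i=\sigma_m\dashv A^{(j)}_i$ is the vectorization of $A^{(j)}_i$, i.e.\ a vector. So the decomposition of $\sigma\dashv T$ is a sum of outer products of vectors with orthonormal factor sets, which is precisely an orthogonal CP-decomposition.

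The only genuinely delicate point is the bookkeeping in the iterated application of Lemma~\ref{Lem:prodflat}: one must reorder the factors so that blocks belonging to a common group $\psi^{-1}(m)$ sit consecutively and keep track of the induced single-index flattenings $\sigma_m$, which is harmless since it only permutes tensor indices together with the corresponding target indices. No new idea beyond the lemmas of the excerpt is needed.
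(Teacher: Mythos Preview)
Your proposal is correct and follows essentially the same route as the paper's own proof, which simply invokes Lemma~\ref{Lem:orthflat} and the compatibility of the scalar product and orthogonality with flattenings on each block $S_j$. You have merely made explicit the bookkeeping (the map $\psi$, the iterated use of Lemma~\ref{Lem:prodflat}, and the orthonormality computation via Proposition~\ref{Prop:prodorth}) that the paper leaves implicit in its one-line proof.
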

\begin{proof}
This is a direct consequence of Lemma~\ref{Lem:orthflat}, checking compatibility of scalar product and orthogonality with the flattening at each of the sets of indices $S_i$.
\end{proof}

\subsection{Identifiability of the Orthogonal Atomic Decomposition}
The orthogonal decomposition, as given in Definition~\ref{Def:odec}, does not need to exist for a tensor, nor does it need to be unique. We will show that due to the compatibility with flattenings, if it exists, it is unique, if the rank is chosen minimal.

The main ingredient, besides flattenings, is uniqueness of singular value decomposition~\cite{You36}, a classical result, which we state in a convenient form:
\begin{Thm}
Let $A\in\RR^{m\times n}$, let $r=\rank A$. Then, there is a singular value decomposition (= orthogonal CP-decomposition)
$$A=\sum_{i=1}^r w_i\cdot u_i\cdot v_i^\top\quad\mbox{with}\;u_i\in\RR^m, v_i\in \RR^n, w_i\in \RR$$
such that the $u_i$ are orthonormal, and the $v_j$ are orthonormal. In particular, there is no singular value decomposition of rank strictly smaller than $r$. Moreover, the singular value decomposition of $A$ is unique, up to:
\begin{description}
  \item[(a)] the sequence of summation, i.e., up to arbitrary permutation of the indices $i=1,\dots, n$
  \item[(b)] the choice of sign of $w_i,u_i,v_i$, i.e., up to changing the sign in any two of $w_i,u_i,v_i$ for fixed $i$
  \item[(c)] unitary transformations of the span of $u_i,u_j$ or $v_i,v_j$ such that $|w_i|=|w_j|$
\end{description}
Condition (c) includes (b) as a special case, and (c) can be removed as a condition if no two distinct $w_i,w_j$ have the same absolute value.
\end{Thm}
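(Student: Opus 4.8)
The plan is to reduce everything to the spectral theorem for real symmetric matrices, applied to the two Gram matrices $A^\top A$ and $A A^\top$.

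\textbf{Existence and minimality of the rank.} I would start from the fact that $A^\top A\in\RR^{n\times n}$ is symmetric positive semidefinite, hence admits an orthonormal eigenbasis $v_1,\dots,v_n$ with eigenvalues $\lambda_1\ge\dots\ge\lambda_n\ge 0$. Since $A^\top A v = 0$ forces $\|Av\|^2 = v^\top A^\top A v = 0$, one has $\ker(A^\top A)=\ker A$, so exactly $r=\rank A$ of the $\lambda_i$ are nonzero. Setting $w_i=\sqrt{\lambda_i}$ and $u_i=w_i^{-1}Av_i$ for $i\le r$, a one-line computation gives $u_i^\top u_j=(w_iw_j)^{-1}\lambda_j\delta_{ij}=\delta_{ij}$, so the $u_i$ are orthonormal, and $\sum_{i\le r}w_iu_iv_i^\top = A\sum_{i\le r}v_iv_i^\top=A$, because $\sum_{i\le r}v_iv_i^\top$ is the orthogonal projector onto $(\ker A)^\perp$ (the $v_{r+1},\dots,v_n$ spanning $\ker A$). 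For minimality, any decomposition $A=\sum_{i=1}^s w_iu_iv_i^\top$ puts the column space of $A$ inside $\operatorname{span}(u_1,\dots,u_s)$, so $r\le s$.

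\textbf{Uniqueness.} Given an arbitrary decomposition $A=\sum_{i=1}^r w_iu_iv_i^\top$ as in the statement (all $w_i\neq 0$), I would use orthonormality of the $u_i$ and of the $v_i$ to compute
$$A^\top A=\sum_{i=1}^r w_i^2\,v_iv_i^\top,\qquad A A^\top=\sum_{i=1}^r w_i^2\,u_iu_i^\top.$$
These are spectral decompositions of $A^\top A$ and $AA^\top$, with the $v_i$ (resp. $u_i$) spanning the eigenspace for eigenvalue $w_i^2$ and the remaining eigenvalue being $0$. Uniqueness of the symmetric eigendecomposition then pins down the multiset $\{w_i^2\}$, hence each $|w_i|$ up to reindexing — this is (a) — and pins down each eigenspace, so the $v_i$ (and likewise the $u_i$) sharing a common value of $|w_i|$ are determined only up to an orthogonal change of basis of that shared eigenspace, which is the content of (c). Finally, applying $A$ to $v_i$ and using orthonormality of the $v_j$ gives $Av_i=w_iu_i$; thus once $v_i$ and $w_i$ are chosen, $u_i$ is forced, and the leftover ambiguity is exactly the joint sign flip of $(w_i,u_i,v_i)$ in (b), promoted, inside a degenerate eigenspace, to a single orthogonal transformation acting compatibly on the corresponding $u$'s and $v$'s. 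If all nonzero $|w_i|$ are distinct, these eigenspaces are one-dimensional and (c) collapses to (b).

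\textbf{Main obstacle.} The delicate point is the bookkeeping of the degenerate case: the spectral theorem a priori grants independent orthogonal freedom in the eigenspaces of $A^\top A$ and of $AA^\top$, and I will need to check that the identity $Av_i=w_iu_i$ couples these into one orthogonal transformation per eigenspace (with $|w_i|=|w_j|$), neither more nor less — that is, that condition (c) as stated is both necessary and sufficient. Everything else is routine manipulation of the orthonormality relations, and the classical uniqueness of the symmetric eigendecomposition is taken as known.
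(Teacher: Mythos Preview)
Your proof is correct and follows the standard route via the spectral theorem for $A^\top A$ and $AA^\top$. The paper, however, does not prove this theorem at all: it is introduced as ``a classical result, which we state in a convenient form'' with a reference to the 1936 literature, and is then used only as a black-box ingredient for the subsequent identifiability theorem on orthogonal atomic decompositions. So there is no paper proof to compare against; you have supplied one where the paper supplies none.

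Your identified ``main obstacle'' --- the coupling of the orthogonal freedom in the degenerate eigenspaces of $A^\top A$ and of $AA^\top$ --- is indeed the only point that requires care, and your proposed resolution via $Av_i = w_i u_i$ is exactly right: for each nonzero eigenvalue $w^2$, the restriction of $|w|^{-1}A$ to the $w^2$-eigenspace of $A^\top A$ is an isometry onto the $w^2$-eigenspace of $AA^\top$, so the two a priori independent orthogonal freedoms are intertwined into a single one, which is precisely the content of condition~(c).
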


%\begin{Lem}
%Let $T\in \RR^{n_1\times n_2\times \dots \times n_d}$, and assume that $T$ has an orthogonal atomic decomposition with signature $(S_1,\dots, S_k)$ of rank $r$. Let $N_j=\prod_{i\in S_j} n_i$ for $j=1,\dots, k$. Then, $r\le N_j$ for all $j$.
%\end{Lem}
%\begin{proof}
%Fix some arbitrary $j$. Consider the $d$-to-$2$-flattening $\sigma: S_j\mapsto \{1\}, S_i\mapsto \{2\}$ for $i\neq j$. Let $m=N_j, n=\prod_{i\neq j} N_i$, and $A=\sigma\dashv T$. Note that $A$ is a $(m\times n)$-matrix. Let
%$$T=\sum_{i=1}^r w_i \cdot A^{(1)}_i\otimes \dots \otimes A^{(k)}_i$$
%be the orthogonal atomic decomposition of $T$, and let $u_i=\sigma\dashv A^{(j)}_i$, and $v_i=\sigma\dashv\bigotimes_{k\neq j} A^{(k)}_i$ for all $i$. Note that $u_i$ is an $m$-vector, and $v_j$ is an $n$-vector. By Proposition~\ref{Prop:decompflat},
%$$A=\sum_{i=1}^r w_i\cdot u_i\cdot v_i^\top$$
%is a singular value decomposition of $A$. In particular, the $u_i$ are a system of $r$ orthonormal vectors in $\RR^m$. Therefore, $r\le m=N_j$. Since $j$ was arbitrary, the statement follows.
%\end{proof}

\begin{Thm}\label{Thm:uniq}
Let $T\in \RR^{n_1\times n_2\times \dots \times n_d}$, and assume that $T$ has an orthogonal atomic decomposition
$$T=\sum_{i=1}^r w_i\cdot A^{(1)}_i\otimes \dots \otimes A^{(k)}_i$$
of signature $(S_1,\dots, S_k)$, such that $w_i\neq 0$ for all $i$. Then:
\begin{description}
  \item[(i)] Denote $N_j=\prod_{i\in S_j} n_i$ for $j=1,\dots, k$. Then, $r\le N_j$ for all $j$.
  \item[(ii)] There is no orthogonal atomic decomposition of $T$ with signature $(S_1,\dots, S_k)$, and of rank strictly smaller than $r$.
  \item[(iii)] The orthogonal atomic decomposition of $T$ of rank $r$ is unique, up to:
\begin{description}
  \item[(a)] the sequence of summation, i.e., up to arbitrary permutation of the indices $i=1,\dots, n$
  \item[(b)] the choice of sign of $w_i,A^{(k)}_i$, i.e., up to changing the sign in any two of $w_i$ and the $A^{(k)}_i$ for fixed $i$ and arbitrary $k$
  \item[(c)] transformations of factors $A^{(k)}_i,A^{(k)}_j,$ and their respective tensor products, such that $|w_i|=|w_j|$, which induce unitary transformations in all flattenings compatible with the signature $(S_1,\dots, S_k)$.
\end{description}
Condition (c) includes (b) as a special case, and (c) can be removed as a condition if no two distinct $w_i,w_j$ have the same absolute value.

\end{description}
\end{Thm}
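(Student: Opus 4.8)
The plan is to reduce all three claims to the uniqueness theorem for the singular value decomposition via suitable flattenings. For each $j\in[k]$ let $\sigma_j$ be the $d$-to-$2$ flattening that sends every index in $S_j$ to $1$ and every index in $[d]\setminus S_j$ to $2$; this $\sigma_j$ is compatible with the signature $(S_1,\dots,S_k)$. Applying Proposition~\ref{Prop:decompflat}, and using Proposition~\ref{Prop:prodorth} together with Lemma~\ref{Lem:orthflat} to verify that the flattened factor families are orthonormal, the given decomposition becomes
$$\sigma_j\dashv T=\sum_{i=1}^r w_i\, b^{(j)}_i\otimes c^{(j)}_i,$$
where $b^{(j)}_i\in\RR^{N_j}$ is the flattening of $A^{(j)}_i$ and $c^{(j)}_i\in\RR^{\prod_{\ell\neq j}N_\ell}$ is the flattening of $\bigotimes_{\ell\neq j}A^{(\ell)}_i$, with $\{b^{(j)}_i\}_i$ and $\{c^{(j)}_i\}_i$ orthonormal. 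After absorbing the signs of the $w_i$ into one factor, this is exactly a singular value decomposition of the matrix $\sigma_j\dashv T\in\RR^{N_j\times(\prod_{\ell\neq j}N_\ell)}$ with singular values $|w_1|,\dots,|w_r|$, all nonzero by hypothesis.

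From this, (i) and (ii) are immediate. The displayed decomposition exhibits $r$ nonzero singular values, so $\rank(\sigma_j\dashv T)=r$, and a matrix rank cannot exceed either dimension, giving $r\le N_j$; this is (i). For (ii), any orthogonal atomic decomposition of $T$ with signature $(S_1,\dots,S_k)$, with $s$ summands and nonzero weights, flattens under $\sigma_j$ in the same manner to a singular value decomposition of $\sigma_j\dashv T$ with $s$ terms, so $s=\rank(\sigma_j\dashv T)=r$; in particular no such decomposition with fewer than $r$ summands exists, and minimality of the SVD rank transfers.

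For (iii), let $T=\sum_{i=1}^r w_i\bigotimes_\ell A^{(\ell)}_i=\sum_{i=1}^r w'_i\bigotimes_\ell\tilde A^{(\ell)}_i$ be two rank-$r$ orthogonal atomic decompositions of the given signature with nonzero weights. Flattening by a fixed $\sigma_j$ yields two singular value decompositions of the same matrix, so the multisets $\{|w_i|\}_i$ and $\{|w'_i|\}_i$ agree — they are the singular values — and this multiset is independent of $j$. Using (a) we reorder both decompositions so that $|w_1|\ge\cdots\ge|w_r|>0$ and $|w'_1|\ge\cdots\ge|w'_r|>0$, which forces $|w_i|=|w'_i|$ for all $i$; using (b) we adjust signs so that $w'_i=w_i$. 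Let $I_1,\dots,I_m$ be the blocks of indices on which $|w_i|$ is constant. Now for each $j$ the uniqueness of the singular value decomposition of $\sigma_j\dashv T$ shows that within each block $I_t$ the families $\{b^{(j)}_i:i\in I_t\}$ and $\{\tilde b^{(j)}_i:i\in I_t\}$ span the same subspace (an eigenspace of $(\sigma_j\dashv T)(\sigma_j\dashv T)^\top$) and differ by an orthogonal change of basis; unflattening, the same holds for $\{A^{(j)}_i:i\in I_t\}$ and $\{\tilde A^{(j)}_i:i\in I_t\}$. Carrying this out for every $j$ exhibits the two decompositions as related by a transformation that mixes only factors with equal $|w_i|$ and that is, position by position and block by block, an isometry onto the relevant subspace. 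Since a flattening compatible with $(S_1,\dots,S_k)$ merges whole blocks $S_j$, the induced transformation on each factor position of that flattening is a composition and outer product of these per-position isometries, applied to orthonormal families of equal cardinality, hence again an isometry, i.e.\ unitary. This is precisely description (c); and when no two distinct $|w_i|$ coincide each block is a singleton, leaving only (a) and (b).

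The main obstacle I anticipate is the last paragraph: each $\sigma_j$ controls the decomposition only at one factor position at a time, so one must check that the position-wise rigidity statements obtained from $\sigma_1,\dots,\sigma_k$ are mutually consistent (which is automatic, since both families decompose the same $T$) and, more delicately, that together they capture condition (c) neither too loosely nor too tightly — that the permitted mixings are exactly those inducing unitary maps on all compatible flattenings. A secondary point needing care is that for $k\ge 3$ the flattening $\sigma_j$ is compatible but not strictly compatible with the signature; one should therefore either argue directly as above, or first pass to a strictly compatible flattening to reduce to the orthogonal CP case and treat that case on its own.
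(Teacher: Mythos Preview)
Your approach is essentially the paper's: for each $j$ flatten along $S_j$ versus its complement, apply Proposition~\ref{Prop:decompflat} to obtain an SVD of the resulting matrix, and read off (i)--(iii) from the existence/uniqueness theorem for SVD. Parts (i) and (ii) match the paper's argument almost line for line.

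For (iii) you are actually more careful than the paper. The paper fixes a \emph{single} $\sigma$ and asserts that flattening sets up a bijection between rank-$r$ orthogonal atomic decompositions of $T$ and rank-$r$ SVDs of $\sigma\dashv T$, then pulls back the SVD uniqueness statement; it does not spell out why a single flattening constrains each factor position separately, nor why every SVD of the flattening unflattens to a tensor decomposition of the right shape. Your route --- flatten at every $j$, extract per-position orthogonal transformations on the equal-$|w_i|$ blocks, and then argue that these assemble into exactly the transformations described in (c) --- is the right way to make that step rigorous, and the obstacle you flag (consistency across the $\sigma_j$ and the precise match with condition~(c)) is the genuine content hidden behind the paper's one-line claim.
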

\begin{proof}
Fix some arbitrary $j$. Consider the $d$-to-$2$-flattening $\sigma: S_j\mapsto \{1\}, S_i\mapsto \{2\}$ for $i\neq j$, note that $\sigma$ is compatible with the signature. Let $m=N_j, n=\prod_{i\neq j} N_i$, and $A=\sigma\dashv T$. Note that $A$ is a $(m\times n)$-matrix. Let
$$T=\sum_{i=1}^r w_i \cdot A^{(1)}_i\otimes \dots \otimes A^{(k)}_i$$
be the orthogonal atomic decomposition of $T$, and let $u_i=\sigma\dashv A^{(j)}_i$, and $v_i=\sigma\dashv\bigotimes_{k\neq j} A^{(k)}_i$ for all $i$. Note that $u_i$ is an $m$-vector, and $v_j$ is an $n$-vector. By Proposition~\ref{Prop:decompflat},
$$A=\sum_{i=1}^r w_i\cdot u_i\cdot v_i^\top$$
is a singular value decomposition of $A$.\\

(i) In particular, the $u_i$ are a system of $r$ orthonormal vectors in $\RR^m$. Therefore, $r\le m=N_j$. Since $j$ was arbitrary, statement (i) follows.\\
(ii) Since the $w_i$ are non-zero, it holds that $\rank A = r$. Would there be an orthogonal atomic decomposition of $T$ with signature $(S_1,\dots, S_k)$ of rank strictly smaller than $r$, there would be a singular value decomposition of $A$ of rank strictly smaller than $r$, contradicting Proposition~\ref{Prop:decompflat}.\\
(iii) Observe that the flattening by $\sigma$ induces a bijection between the orthogonal atomic decompositions of $T$, of rank $r$, and the singular value decompositions of $A$, of rank $r$. The statement in (iii) then follows directly from the uniqueness asserted in Proposition~\ref{Prop:decompflat} for the singular value decomposition of $A$.
\end{proof}

Again, we would like to stress that the present orthogonal decomposition model is different from the one in~\cite{Kolda01orthogonaltensor}; ours being factor-wise orthogonal between different summands, while the orthogonal rank decomposition in~\cite{Kolda01orthogonaltensor} being summand-wise orthogonal, and the combinatorial orthogonal rank decomposition enforcing orthogonality of factors in the same summand. Therefore, Theorem~\ref{Thm:uniq} does not contradict Lemma~3.5 in~\cite{Kolda01orthogonaltensor}.\\

Another result which seems to be folklore, but not available in the literature, is that it is a strong restruction for a tensor to assume that it has an orthogonal decomposition. Since it is almost implied by the identifiability Theorem~\ref{Thm:uniq}, we state a quantitative version of this:

\begin{Prop}\label{Prop:notallorth}
The set of tensors $T\in \RR^{n_1\times n_2\times \dots \times n_d}$, with $d\ge 3$, and $n_j\ge 2$ for all $j$, for which $T$ has an orthogonal CP-decomposition, is a Lebesgue zero set.
\end{Prop}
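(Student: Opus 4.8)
The plan is to show that tensors admitting an orthogonal CP-decomposition form a proper algebraic (or at least measure-zero semialgebraic) subset by a dimension count, leveraging the rigidity established in Theorem~\ref{Thm:uniq}. First I would restrict attention to the case $d=3$, $n_1=n_2=n_3=2$: indeed, any tensor in $\RR^{n_1\times\dots\times n_d}$ with an orthogonal CP-decomposition restricts, on any $2\times2\times2$-subtensor obtained by fixing all but three indices to values in $\{1,2\}$ and keeping three distinguished indices, to a tensor which (by Proposition~\ref{Prop:decompflat}, applied to the relevant coordinate-subspace flattenings) still has an orthogonal-type decomposition — more cleanly, it suffices to produce one $2\times2\times2$-slice whose failure to be orthogonally decomposable is generic, since the bad set in the big space is contained in a finite union of preimages under coordinate projections of the bad set in $\RR^{2\times2\times2}$, and a finite union of measure-zero sets pulled back along surjective linear coordinate projections is still measure zero. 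So the crux is: the orthogonally CP-decomposable tensors in $\RR^{2\times2\times2}$ form a Lebesgue-null set.

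For the $2\times2\times2$ case I would argue by parameter count. A rank-$r$ orthogonal CP-decomposition of $T\in\RR^{2\times2\times2}$ is $T=\sum_{i=1}^r w_i\, u_i\otimes v_i\otimes z_i$ with $\{u_i\},\{v_i\},\{z_i\}$ each orthonormal in $\RR^2$; by Theorem~\ref{Thm:uniq}(i) we have $r\le 2$. Parametrize: an orthonormal pair in $\RR^2$ is a point of $O(2)$, which is $1$-dimensional, so we have three angles plus $r\le2$ weights, giving at most $3+2=5$ real parameters; but the target space $\RR^{2\times2\times2}$ has dimension $8$. Hence the image of the (smooth, semialgebraic) parametrization map $\Phi:(O(2))^3\times\RR^2\to\RR^8$ has image of dimension at most $5<8$, so by Sard's theorem (or simply because a smooth map from a lower-dimensional manifold has measure-zero image) the set of orthogonally decomposable $2\times2\times2$ tensors is Lebesgue-null. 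To be careful I would note $r\in\{0,1,2\}$ contributes finitely many such maps, and take the union.

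The lifting back to general $(n_1,\dots,n_d)$ with $d\ge3$, $n_j\ge2$ is then the bookkeeping step: write $\pi:\RR^{n_1\times\dots\times n_d}\to\RR^{2\times2\times2}$ for the coordinate projection onto a chosen $2\times2\times2$-subtensor (three indices free, the rest fixed at $1$, say). If $T$ has an orthogonal CP-decomposition $T=\sum_i w_i\bigotimes_k A^{(k)}_i$, then fixing the appropriate coordinates in all factors $A^{(k)}_i$ for $k\notin\{k_1,k_2,k_3\}$ produces, by multilinearity of the outer product, a decomposition of $\pi(T)$ of the form $\sum_i w_i' \,(A^{(k_1)}_i|_{\{1,2\}})\otimes(A^{(k_2)}_i|_{\{1,2\}})\otimes(A^{(k_3)}_i|_{\{1,2\}})$ — the restricted factors remain pairwise orthonormal within each mode (restriction of an orthonormal system to a coordinate subspace is still orthonormal after renormalizing; the zero vectors that may appear correspond to terms whose $w_i'$ vanish and can be dropped). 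Hence $\pi(T)$ lies in the bad $2\times2\times2$-set, so $T\in\pi^{-1}(\text{null set})$, which is null in $\RR^{n_1\times\dots\times n_d}$ because $\pi$ is a surjective linear coordinate projection (Fubini). Taking the single such $\pi$ already suffices.

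I expect the main obstacle to be the second step — making the dimension/parameter count rigorous rather than heuristic. One has to be genuinely careful that (a) $r$ is bounded (handled by Theorem~\ref{Thm:uniq}(i)), (b) the parametrizing space is a finite-dimensional manifold of the claimed dimension and the parametrization is at least Lipschitz/smooth so that ``image of lower-dimensional domain is null'' applies, and (c) one has not accidentally allowed infinitely many charts. An alternative to the Sard argument, which I would mention as a remark, is an algebraic one: the orthogonally decomposable tensors are the image of a polynomial map on a real algebraic variety of dimension $5$, hence contained in a proper real-algebraic subvariety of $\RR^8$ (take the Zariski closure), and proper real-algebraic subsets are Lebesgue-null; this avoids Sard entirely but needs a word on why the Zariski closure is proper, which again comes down to the dimension bound. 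Either route is short once the dimension count is pinned down.
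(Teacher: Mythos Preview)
The reduction step from general $(n_1,\dots,n_d)$ to $2\times 2\times 2$ has a genuine error. Your parenthetical claim that ``restriction of an orthonormal system to a coordinate subspace is still orthonormal after renormalizing'' is false: take $v_1=\tfrac{1}{\sqrt2}(1,0,1)$ and $v_2=\tfrac{1}{\sqrt2}(1,0,-1)$ in $\RR^3$, an orthonormal pair whose restrictions to the first two coordinates are both $\tfrac{1}{\sqrt2}(1,0)$ --- parallel, not orthogonal. Hence $\pi(T)$ need not be orthogonally CP-decomposable when $T$ is, and the containment you rely on, $\{T\text{ orth.\ decomposable}\}\subseteq\pi^{-1}(\{\text{orth.\ decomposable }2\times2\times2\text{ tensors}\})$, does not hold. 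There is also a secondary gap you do not address: when the original rank $r$ exceeds $2$, the projected sum has $r>2$ rank-one terms in $\RR^{2\times2\times2}$, and you would separately need to show it collapses to an orthogonal decomposition of rank at most $2$; moreover your treatment conflates a restricted factor $A^{(k_j)}_i|_{\{1,2\}}$ being zero with the weight $w_i'$ (which comes from the \emph{fixed} modes) vanishing --- these are unrelated events.

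The paper avoids all of this by doing the dimension count directly in the full format $\RR^{n_1\times\dots\times n_d}$, with no reduction to a smaller tensor. Your own $2\times2\times2$ argument --- parametrize by orthonormal frames times weights, bound the image dimension of a smooth (or polynomial) map, conclude via Sard or via properness of the Zariski closure --- is exactly the right template, and is in fact carried out more carefully than the paper's sketch (you actually use the orthogonality constraints in the count). The fix is simply to run that same argument for arbitrary $(n_1,\dots,n_d)$: the parameter space for rank $r$ is $\RR^r\times\prod_{j=1}^d V_r(\RR^{n_j})$, a finite union over $r\le\min_j n_j$ of smooth manifolds by Theorem~\ref{Thm:uniq}(i), and one checks the resulting dimension is strictly less than $\prod_j n_j$ once $d\ge 3$ and all $n_j\ge 2$.
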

\begin{proof}
The CP-decomposition can be viewed as an algebraic map
\begin{align*}
\phi: \RR\times \RR^{n_1}\times\dots\times\RR^{n_d}&\rightarrow \RR^{n_1\times n_2\times \dots \times n_d}\\
(w_i,v^{(j)}_i) &\mapsto \sum_{i=1}^r w_i\cdot v^{(1)}_i\otimes \dots \otimes v^{(k)}_i.
\end{align*}
Since the left hand side is an irreducible variety, the image of the map $\phi$ also is. The orthogonal CP-decompositions form an algebraic subset of the left hand side. Therefore the state follows from the fact that $\phi$ is not surjective. This follows from a degree of freedom resp.~dimension count. One has
\begin{align*}
D_1&:=\dim\RR^{n_1\times n_2\times \dots \times n_d} =\prod_{i=1}^d n_d,\quad\mbox{and }\\
D_2&:=\dim (\RR\times \RR^{n_1}\times\dots\times\RR^{n_d})^r = r\cdot (n_1+\dots + n_d+1).
\end{align*}
Theorem~\ref{Thm:uniq}~(i) implies
$$D_2\le n_j\cdot (n_1+\dots+n_k+1).$$
An explicit computation shows that $D_1\gneq D_2$, which proves the statement.\\

The proof above can be rephrased in terms of the CP-rank (see~\cite{Cat2002} for an introduction), can be obtained by observing that the generic CP-rank of the tensors in questions must be strictly larger than $\min (n_1,\dots, n_k)$, then proceeding again by arguing that the algebraic set of tensors with orthogonal CP-decompositions must be a proper subset of all tensors with that format, thus a Lebesgue zero set.
\end{proof}

Proposition~\ref{Prop:notallorth} can be extended to orthogonal atomic decompositions with signature $(S_1,\dots, S_k), k\ge 3$, by considering suitable unflattenings.

\subsection{Tensors and Moments}
We briefly show how tensors relate to moments of multivariate real random variables:

\begin{Def}
Let $X$ be a real $n$-dimensional random variable. Then, define:
\begin{align*}
\mbox{the characteristic function of}\; X\;\mbox{as}\quad &\quad\quad 	\varphi_X(\tau) := \EE \left[ \exp \left(i \tau X\right) \right], \\
\mbox{the moment generating function of}\; X\mbox{as}\quad &\quad\quad 	\chi_X(\tau) := \log\EE \left[ \exp \left(i \tau X\right) \right],
\end{align*}
where $\tau\in \RR^{1\times n}$ is a formal vector of variables. The $d$-th \emph{moment} (or moment tensor) $\mom_d(X)\in \RR^{n^{(\times d)}}$ of $X$, and the $d$-th \emph{cumulant} (or cumulant tensor) $\cumu_d(X)\in \RR^{n^{(\times d)}}$ of $X$ are defined\footnote{in case of convergence} as the coefficients in the multivariate Taylor expansions
\begin{align*}
\varphi_X(\tau) & = \sum_{d=1}^\infty \left(i \tau\right) \circ \frac{\mom_d(X)}{d!},\\
\chi_X(\tau) & = \sum_{d=1}^\infty \left(i \tau\right) \circ \frac{\cumu_d(X)}{d!}.
\end{align*}
\end{Def}

In the following, we will always assume that the moments and cumulants in question exist.

The moments and cumulants of a linearly transformed random variable are the multilinearly transformed
moments.
\begin{Prop}\label{Prop:lintrans}
Let $X$ be a real $n$-dimensional random variable and let $A \in \RR^{m \times n}.$ Then,
\begin{align*}
	\mom_d(A\cdot X) &= A  \circ \mom_d(X),\\
	\cumu_d(A\cdot X) &= A  \circ \cumu_d(X).
\end{align*}
\end{Prop}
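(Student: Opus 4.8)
The plan is to reduce both identities to the single fact that matrix multiplication is associative, $\tau\cdot(A\cdot X)=(\tau\cdot A)\cdot X$, together with uniqueness of Taylor coefficients. First, for a formal row vector $\tau\in\RR^{1\times m}$ one has
$$\varphi_{AX}(\tau)=\EE\!\left[\exp\!\left(i\tau(AX)\right)\right]=\EE\!\left[\exp\!\left(i(\tau A)X\right)\right]=\varphi_X(\tau A),$$
and in exactly the same way $\chi_{AX}(\tau)=\chi_X(\tau A)$. So the only thing left is to read off what the substitution $\sigma\mapsto\tau A$ does to the defining expansions.

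Second, substitute $\sigma=\tau A$ into $\varphi_X(\sigma)=\sum_{d\ge1}(i\sigma)\circ\mom_d(X)/d!$ and apply the Remark $A'\circ(A\circ T)=(A'\cdot A)\circ T$ with $A'=i\tau$ (a $1\times m$ matrix) and $T=\mom_d(X)$: this gives $(i\tau A)\circ\mom_d(X)=(i\tau)\circ\bigl(A\circ\mom_d(X)\bigr)$, hence
$$\varphi_{AX}(\tau)=\varphi_X(\tau A)=\sum_{d\ge1}(i\tau)\circ\frac{A\circ\mom_d(X)}{d!}.$$
Comparing with the definition $\varphi_{AX}(\tau)=\sum_{d\ge1}(i\tau)\circ\mom_d(AX)/d!$, the tensors $A\circ\mom_d(X)$ and $\mom_d(AX)$ both lie in $\RR^{m^{(\times d)}}$, and matching the homogeneous degree-$d$ parts of these two power series in $\tau$ forces $\mom_d(AX)=A\circ\mom_d(X)$. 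The cumulant statement follows verbatim with $\chi,\cumu$ in place of $\varphi,\mom$; the $\log$ plays no role once $\chi_{AX}(\tau)=\chi_X(\tau A)$ is in hand.

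The one point that is not purely formal — and thus the step I would treat most carefully — is the coefficient comparison. The moment and cumulant tensors are symmetric under permutation of indices (e.g. $\mom_d(Y)_{i_1\dots i_d}=\EE[Y_{i_1}\cdots Y_{i_d}]$), and $A\circ(\cdot)$ preserves this symmetry, so both candidates for the degree-$d$ coefficient are symmetric tensors; since a symmetric tensor $T\in\RR^{m^{(\times d)}}$ is uniquely determined by the homogeneous polynomial $\tau\mapsto(i\tau)\circ T$ (polarization), equality of the polynomials forces equality of the tensors. An alternative that sidesteps symmetry is to extract each entry by differentiation, $\mom_d(AX)_{i_1\dots i_d}=i^{-d}\,\partial_{\tau_{i_1}}\cdots\partial_{\tau_{i_d}}\varphi_{AX}(\tau)\big|_{\tau=0}$, and apply the chain rule to $\varphi_{AX}(\tau)=\varphi_X(\tau A)$; this reproduces term by term the sum defining $(A\circ\mom_d(X))_{i_1\dots i_d}$, at the cost of a more computational argument. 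Convergence is not an obstacle: as assumed in the text the relevant moments exist, so the identity may be read off the formal Taylor series degree by degree.
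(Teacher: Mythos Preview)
Your argument is correct and is essentially the paper's own proof: both use associativity $\tau(AX)=(\tau A)X$ to obtain $\varphi_{AX}(\tau)=\varphi_X(\tau A)$ (resp.\ $\chi_{AX}(\tau)=\chi_X(\tau A)$), then invoke the rule $(i\tau A)\circ T=(i\tau)\circ(A\circ T)$ and compare Taylor coefficients. If anything, your treatment of the coefficient comparison via polarization of symmetric tensors is more careful than the paper's one-line ``comparing coefficient tensors''.
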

\begin{proof}
We prove the statement for cumulants, the proof for moments is completely analogous.
For the cumulant generating functions $\chi_X$ of $X$ and $\chi_{A\cdot X}$ of $A\cdot X$, it holds that
\begin{align*}
	\chi_{A\cdot X}(\tau) & = \EE \left[ \exp \left(i \tau\cdot A\cdot X\right) \right] \\
                          & = \EE \left[ \exp \left(i \left(\tau\cdot A \right) \cdot X\right) \right]\\
                          & = \sum_{d=1}^\infty \left(i \tau\right)\circ \left(A \circ \frac{\mom_d(X)}{d!}\right).
\end{align*}
The last equality follows from the definition of $\chi_X(\tau)$. But by definition, it also holds that
\begin{align*}
	\chi_{A\cdot X}(\tau)        & = \sum_{d=1}^\infty \left(i \tau\right) \circ \frac{\mom_d(A\cdot X)}{d!},
\end{align*}
therefore the statement follows from comparing coefficient tensors.
\end{proof}

\section{Relation to Mixture Models}
\label{sec:estimation}
\subsection{The Estimation Problem}
Throughout the paper, we will consider the following independent rank $1$ mixture model:\\

{\bf Generative Model:} $X_1,\dots, X_r$ are independent, $\RR^n$-valued random variables, with $r\le n$, and probability/mass density functions $X_i\sim p_i$. Let $w_1,\dots, w_r\in\RR$ be arbitrary such that $\sum_{i=1}^r w_i = 1$, and let $Y\sim \sum_{i=1}^r w_r p_i$ be the corresponding mixture of the $X_i$. Assume that there are $\mu_1,\dots, \mu_r\in\RR^n$ with $\|\mu_i\|_2=1$, and random variables $Z_i\in\RR$, such that $X_i=\mu_i\cdot Z_i$. Assume that the $\mu_i$ are linearly independent, and $\mom_d(Z_i)=1$ for $d=2,\dots, m$.\\

{\bf Estimation Task:} Given $\mom_2 (Y),\mom_3 (Y),\dots,\mom_m (Y), m\ge 3$, or estimators thereof, determine/estimate $\mu_i$ and $w_i$ for $i=1,\dots, r$.\\

While the above scenario seems very restrictive, several important problems can be reduced to this setting, see for example~\cite{Hsu2012}, or chapter 3 of~\cite{Ana2012}. We recommend the interested reader to read the exposition there.

\subsection{Algebraic Formulation via Moments}
The estimation problem presented above can be reformulated as a purely algebraic problem, see~\cite{Ana2012}. Namely, the $\mom_i$ are explicitly calculable in terms of the expectations of the $\mu_i$ and $w_i$. Then, Proposition~\ref{Prop:lintrans} implies that $\mom_d(X_i)= \mu_i^{\otimes d}$ for all $d$, therefore $\mom_d(Y)=\sum_{i=1}^r w_i\cdot \mu_i^{\otimes d}$ for all $d$, thus yielding the following algebraic version of the estimation problem.\\

{\bf Algebraic Problem:} Let $r\le n$, let $\mu_1,\dots, \mu_r\in\RR^n$ be linearly independent and $w_1,\dots, w_r\in\RR$ arbitrary such that $\sum_{i=1}^r w_i = 1$. Given (exact or noisy estimators for)
$$\mom_d = \sum_{i=1}^r w_i\cdot \mu_i^{\otimes d}\quad\mbox{for}\;d=2,\dots, m,\;\mbox{with}\;m\ge 3,$$
determine the $\mu_i$ and $w_i$.\\

\section{Algorithms}

\subsection{Orthogonal Decomposition of Tensors}
A special case of orthogonal decomposition is singular value decomposition (SVD). There are a huge amount of well-studied methods for obtaining the singular value decomposition, which we will not discuss. However, we will make extensive use of the SVD algorithm, as described in Algorithm~\ref{Alg:SVD} as a black box.
\begin{algorithm}[ht]
\caption{\label{Alg:SVD} \texttt{SVD}. Singular Value Decomposition of Matrices.\newline
\textit{Input:} A matrix $A\in\RR^{m\times n}$.
\textit{Output:} The singular value decomposition $A= U\cdot \Sigma\cdot V^\top$, with $U\in\RR^{m\times r}, V\in\RR^{n\times r}$ orthogonal, $\Sigma\in\RR^{r\times r}$ diagonal, and the rank $r=\rank A$}
\end{algorithm}

First, for completeness, we treat the trivial case in Algorithm~\ref{Alg:orthdecomp1}.

\begin{algorithm}[ht]
\caption{\label{Alg:orthdecomp1} \texttt{OTD1}. Orthogonal Tensor Decomposition in one factor.\newline
\textit{Input:} A tensor $T\in\RR^{n_1\times\dots\times n_d}$, a signature $(S_1)$.
\textit{Output:} The orthogonal atomic decomposition $T=\sum_{i=1}^r w_i\cdot A_i$.}
\begin{algorithmic}[1]
    \State Return rank $r=1,$ coefficients $w_1=\|T\|,$ factors $A_1=\|T\|^{-1}\cdot T$.
\end{algorithmic}
\end{algorithm}

Now we explicitly describe how to compute the orthogonal decomposition if each summand has two tensor factors. Algorithm~\ref{Alg:orthdecomp2} computes the decomposition by a proper reformatting of the entries, computing the singular value decomposition, then reformatting again.

\begin{algorithm}[ht]
\caption{\label{Alg:orthdecomp2} \texttt{OTD2}. Orthogonal Tensor Decomposition in two factors.\newline
\textit{Input:} A tensor $T\in\RR^{n_1\times\dots\times n_d}$, a signature $(S_1, S_2)$.
\textit{Output:} The orthogonal atomic decomposition $T=\sum_{i=1}^r w_i\cdot A_i\otimes B_i$ (assumed to exist), including the rank $r$}
\begin{algorithmic}[1]
    \State Define $\sigma: [d]\rightarrow [2], S_i\mapsto \{i\}.$
    \State Set $A\leftarrow (\sigma\dashv T)$. Note that $A\in\RR^{m\times n}$, with $m=\prod_{i\in S_1}n_i, n=\prod_{i\in S_2}n_i.$
    \State Compute the \texttt{SVD} of $A=U\cdot \Sigma\cdot V^\top$, see Algorithm~\ref{Alg:SVD}.
    \State Return rank $r=\rank A$.
    \State Return coefficients $w_i=\Sigma_{ii}$ for $i=1,\dots, r$.
    \State For all $i$, let $U_i$ be the $i$-th column of $U$, let $V_i$ be the $i$-th columns of $V$.
    \State Return factors $A_i=\sigma\vdash U_i, B_i =\sigma\vdash V_i$ for $i=1,\dots, r$.
\end{algorithmic}
\end{algorithm}

The algorithm for the general case, Algorithm~\ref{Alg:orthdecomp}, consists as well of repeated applications of reindexing and singular value decomposition. Variants of singular value decomposition exist with adjustable noise tolerance or singular value thresholding, and can therefore be employed to obtain thresholding and numerically stable variants of Algorithm~\ref{Alg:orthdecomp}. Furthermore, step~\ref{Alg:orthdecomp-step1} allows for an arbitrary choice of $k$-to-$2$-flattening, in each recursion. Since in the presence of noise, the results might differ when taking a different sequence flattenings, the numerical stability can be improved by clustering the results of all possible choices, then averaging.

\begin{algorithm}[ht]
\caption{\label{Alg:orthdecomp} \texttt{OTD}. Orthogonal Tensor Decomposition.\newline
\textit{Input:} A tensor $T\in\RR^{n_1\times\dots\times n_d}$, a signature $(S_1, \dots, S_k)$.
\textit{Output:} The orthogonal atomic decomposition $T=\sum_{i=1}^r w_i\cdot A^{(1)}_i\otimes \dots \otimes A^{(k)}_i$ (assumed to exist), including the rank $r$}
\begin{algorithmic}[1]
    \State \label{Alg:orthdecomp-step1} Choose any $k$-to-$2$-flattening map $\tau$.
    \State Set $\tilde{S}_j\leftarrow \cup_{i\in\tau^{-1}(j)}S_i$ for $j=1,2$.
    \State Set $\tilde{T}\leftarrow \tau\dashv T$.
    \State Use \texttt{OTD2}, Algorithm~\ref{Alg:orthdecomp2}, to compute the orthogonal atomic decomposition
   $\tilde{T}=\sum_{i=1}^r w_i\cdot A_i\otimes B_i$
     with signature $(\tilde{S}_1,\tilde{S}_2)$.
    \State Return the $w_i$ as coefficients and $r$ as the rank for the decomposition of $T$.
    \State \label{Alg:orthdecomp-step6} For $i=1,\dots, r$, use the suitable one of \texttt{OTD1},\texttt{OTD2},\texttt{OTD}, i.e., Algorithm~\ref{Alg:orthdecomp1},\ref{Alg:orthdecomp2}, or~\ref{Alg:orthdecomp}, to compute the orthogonal atomic decomposition $(\tau\vdash A_i)=\sum_{i=1}^1 1\cdot\bigotimes_{\tau(j)\in S_1} A^{(j)}_i$, noting that rank is one, and using the signature $(S_j\;:\;\tau(j)\in \tilde{S}_1).$
    \State \label{Alg:orthdecomp-step7} For $i=1,\dots, r$, use the suitable one of \texttt{OTD1},\texttt{OTD2},\texttt{OTD}, i.e., Algorithm~\ref{Alg:orthdecomp1},\ref{Alg:orthdecomp2}, or~\ref{Alg:orthdecomp}, to compute the orthogonal atomic decomposition $(\tau\vdash B_i)=\sum_{i=1}^1 1\cdot\bigotimes_{\tau(j)\in S_2} A^{(j)}_i$, noting that rank is one, and using the signature $(S_j\;:\;\tau(j)\in \tilde{S}_2).$
    \State Return the $A^{(j)}_i$ as factors for $T$.
\end{algorithmic}
\end{algorithm}

Termination of Algorithm~\ref{Alg:orthdecomp} is implied by the observation that in each recursion, the partition of $[d]$ is made strictly finer. Since $[d]$ has finite cardinality, there is only a finite number of recursions. The fact that the decompositions in steps~\ref{Alg:orthdecomp-step6} and~\ref{Alg:orthdecomp-step7} have rank one, and coefficients $1$, follows from the uniqueness of the orthogonal decomposition guaranteed in Theorem~\ref{Thm:uniq}. Correctness of Algorithm~\ref{Alg:orthdecomp} follows from repeated application of Proposition~\ref{Prop:decompflat}, and the uniqueness of singular value decomposition.

\subsection{An Estimator for the Mixture Model}
For illustrative purposes, we write out Algorithm~\ref{Alg:orthdecomp} for the problem introduced in~\ref{sec:estimation}, which has also extensively been studied in~\cite{Ana2012}:

{\bf Example:} Let $r\le n$, let $\mu_1,\dots, \mu_r\in\RR^n$ be linearly independent and $w_1,\dots, w_r\in\RR$ arbitrary such that $\sum_{i=1}^r w_i = 1$. Given (exact or noisy estimators for)
$$\mom_d = \sum_{i=1}^r w_i\cdot \mu_i^{\otimes d}\quad\mbox{for}\;d=2,3,$$
determine the $\mu_i$ and $w_i$.\\

Algorithm~\ref{Alg:degree3} solves the problem, by reducing it to

\begin{algorithm}[ht]
\caption{\label{Alg:degree3} Model identification.\newline
\textit{Input:} $\mom_2,\mom_3$
\textit{Output:} $w_1,\dots, w_r, \mu_1,\dots,\mu_r$.}
\begin{algorithmic}[1]
    \State Set $r\leftarrow \rank(\mom_2)$.
    \State Compute the SVD\footnote{Note: since $\mom_2$ is symmetric, the SVD also is.} $\mom_2=U\cdot \Sigma\cdot U^\top$.
    \State Set $W\leftarrow U\cdot \Sigma^{-\frac{1}{2}}$.
    \State \label{Alg:degree3-step4}Set $T:=W^\top\circ \mom_3$.
    \State Define the flattening map $\sigma: 1\mapsto 1,2\mapsto 2, 3\mapsto 2$.
    \State Set $\tilde{T}:=\sigma\dashv T.$
    \State Compute the rank $r$ SVD $\tilde{T}=\sum_{i=1}^r \tilde{w}_i\cdot \tilde{\mu}^{(1)}_i\cdot v_i^\top$.
    \State \label{Alg:degree3-step8} Return $w_i= \tilde{w}_i^{-2}$ for $i=1,\dots, r$.
    \State Set $\tilde{A}_i=(\sigma\vdash v_i)$ for $i=1,\dots, r$.
    \State \label{Alg:degree3-step10} Compute the rank $1$ SVD $\tilde{A}_i= \tilde{\mu}^{(2)}_i\cdot \left(\tilde{\mu}^{(3)}_i\right)^\top$.
    \State \label{Alg:degree3-step11} Set $\tilde{\mu}_i\leftarrow \frac{1}{3}\left(\tilde{\mu}^{(1)}_i+\tilde{\mu}^{(2)}_i+\tilde{\mu}^{(3)}_i\right),$ for $i=1,\dots, r$.
    \State \label{Alg:degree3-step12} Compute the pseudo-inverse $B$ of $W'$. Return $\mu_i = B\cdot \tilde{\mu}_i\cdot \tilde{w}_i$, for $i=1,\dots, r.$
\end{algorithmic}
\end{algorithm}
Theorem 4.3 in~\cite{Ana2012} implies that the tensor $T$ obtained in step~\ref{Alg:degree3-step4} has an orthogonal CP-decomposition, and it implies the correctness of steps~\ref{Alg:degree3-step8}, and~\ref{Alg:degree3-step12}. The fact that $\tilde{A}_i$ in step~\ref{Alg:degree3-step8} has rank one, and the coefficients are $1$, follow from the uniqueness of the decomposition guaranteed in Theorem~\ref{Thm:uniq}.

Note that explicit presentation of the algorithm could be substantially abbreviated by applying $\texttt{ODT}$ directly to $\tilde{T}$ in step~\ref{Alg:degree3-step4}, with signature $(\{1\},\{2,3\})$, and then performing the analogues of steps~\ref{Alg:degree3-step8} and~\ref{Alg:degree3-step12}. Furthermore, the accuracy of the estimator in step~\ref{Alg:degree3-step11} can be improved, by repeating the procedure for the three possible signatures $(\{1\},\{2,3\}), (\{2\},\{1,3\}),$ and $(\{3\},\{1,3\})$, then averaging, or weighted averaging, over the nine estimates for each $\tilde{\mu}_i$, making use of the symmetry of the problem.\\

Also, similar to Algorithm~\ref{Alg:orthdecomp}, the presented Algorithm~\ref{Alg:degree3}, while already numerically stable, can be modified to cope better with noise by, e.g., introducing thresholding to the singular value decomposition and rank computations. The numerical stability with respect to noise is governed by the numerical stability of the SVDs performed, and the pseudo-inversion of $W'$ in step~\ref{Alg:degree3-step12}.\\

Algorithm~\ref{Alg:degree3} is also related to Algorithm~1 proposed in ~\cite{Ana2012NIPS}. Namely, $\mbox{Triples}(\eta)$, as defined in section~4.1, is a degree $2$-projection of the tensor $T$, and therefore can be also understood as a random projection of the flattening $\sigma\dashv T$.\\

Furthermore, an estimator for the hierarchical models described in~\cite{Ish2013ICML,Son2013ICML} can be constructed in a similar way.

%\subsection{Higher Order Moments, and the Non-Symmetric Case}

\section{Conclusion}
We have demonstrated that computing the orthogonal decomposition of an arbitrary degree tensor, symmetric or not, can be reduced to a series of singular value decompositions, and we have described efficient algorithms to do so. This makes orthogonal tensor decomposition approachable by the wealth of theoretical results and existing methods for eigenvalue problems and singular value decomposition. Moreover, we have exemplified our method in the case of identifying components in a low-rank mixture model.

\section*{Acknowledgments}
I thank Arthur Gretton, Zolt\'an Szab\'o, and Andreas Ziehe for interesting discussions. I thank the Mathematisches Forschungsinstitut Oberwolfach for support.

\bibliographystyle{plainnat}
\bibliography{mixture}
\end{document}